\definecolor{color1}{cmyk}{0,100,0,0}
\definecolor{color2}{cmyk}{0,60,100,0}
\definecolor{color3}{cmyk}{100,0,0,0}
\definecolor{color4}{cmyk}{0,0,0,100}
\pgfplotsset{compat=1.14}
\theoremstyle{definition}
\newtheorem{definition}{Definition}[section]
\title[A Generalised Approach for Encoding and Reasoning with Qualitative Theories in ASP]
        {A Generalised Approach for Encoding and Reasoning with Qualitative Theories in Answer Set Programming}
   \author[G. Baryannis et al.]
          {GEORGE BARYANNIS, ILIAS TACHMAZIDIS, SOTIRIS BATSAKIS, GRIGORIS ANTONIOU\\
          University of Huddersfield, UK\\
          \email{\{g.bargiannis, i.tachmazidis, s.batsakis, g.antoniou\}@hud.ac.uk}
          \and MARIO ALVIANO\\
             University of Calabria, Italy\\
             \email{alviano@mat.unical.it}
           \and EMMANUEL PAPADAKIS\\
             Center for Spatial Studies, University of California, Santa Barbara, USA\\
             \email{epd@ucsb.edu}}
\begin{document}

\label{firstpage}

\maketitle

  \begin{abstract}
    Qualitative reasoning involves expressing and deriving knowledge based on qualitative terms such as natural language expressions, rather than strict mathematical quantities. Well over 40 qualitative calculi have been proposed so far, mostly in the spatial and temporal domains, with several practical applications such as naval traffic monitoring, warehouse process optimisation and robot manipulation. Even if a number of specialised qualitative reasoning tools have been developed so far, an important barrier to the wider adoption of these tools is that only qualitative reasoning is supported natively, when real-world problems most often require a combination of qualitative and other forms of reasoning. In this work, we propose to overcome this barrier by using ASP as a unifying formalism to tackle problems that require qualitative reasoning in addition to non-qualitative reasoning. A family of ASP encodings is proposed which can handle any qualitative calculus with binary relations. These encodings are experimentally evaluated using a real-world dataset based on a case study of determining optimal coverage of telecommunication antennas, and compared with the performance of two well-known dedicated reasoners. Experimental results show that the proposed encodings outperform one of the two reasoners, but fall behind the other, an acceptable trade-off given the added benefits of handling any type of reasoning as well as the interpretability of logic programs. This paper is under consideration for acceptance in TPLP.
  \end{abstract}

  \begin{keywords}
    Answer Set Programming, Qualitative Calculus, Qualitative Reasoning, Spatial Reasoning
  \end{keywords}


\section{Introduction}


Reasoning with qualitative theories involves abstracting away from mathematical quantities, using natural language expressions such as relations to compare, rather than measure. Qualitative reasoning is motivated by human cognition and leads to results that may be less precise but are more comprehensible. This makes it suitable for cases where understandable interactions and acceptable explanations are prioritised over high precision, or when the latter is not possible because knowledge is imprecise or incomplete~\cite{Wolter2012}.

While qualitative reasoning can be applied in many fields, the most well-researched domains are spatial and temporal ones, with over 40 different qualitative calculi defined and used in many applications, from naval traffic monitoring, to warehouse process optimisation and robot manipulation. A number of toolkits have been developed that support reasoning with these calculi, with the most prominent ones being GQR~\cite{Westphal2009} and SparQ~\cite{Wolter2012}. These are dedicated to qualitative reasoning and optimised for it and do not support reasoning beyond qualitative calculi. However, real-world problems most often require a combination of qualitative and non-qualitative reasoning, for which such toolkits are not fit for purpose.

One solution to support different forms of reasoning would be to split the problem into qualitative and non-qualitative parts and use different reasoners independently. However, this would potentially require combining different representations and handling any compatibility issues that may arise due to different reasoning mechanisms. Instead, we propose to use a single unifying formalism that is capable of handling any type of reasoning, both qualitative and otherwise. Answer Set Programming (ASP) is one such formalism for a number of reasons. ASP allows for solving hard search and optimisation problems, and reasoning with qualitative relations is one such problem. Previous research~\cite{Li2012,Brenton2016,Baryannis2018} has shown that ASP is capable of expressing different qualitative calculi and several different encodings have been proposed with varying levels of performance and applicability. Also, the logic programming nature of ASP means that encodings are human-readable and configurable which is in line with the notion of prioritising comprehensibility that underlies qualitative reasoning.

The main contributions of this paper are the following:
\begin{itemize}
    \item A generalised approach for creating an ASP encoding based on the formalisation of binary qualitative calculi by~\citeN{Dylla2017}, ensuring that the approach is applicable to any such calculus.
    \item Two optimised versions of the generalised encoding that are rooted in algebraic properties that are common across many (but not all) qualitative calculi.
    \item A prototype tool that automatically generates ASP encodings, from input files in formats accepted by dedicated qualitative reasoning toolkits, in order to facilitate the use of the proposed approach by researchers with no prior expertise in ASP.
\end{itemize}

Experimental evaluation quantifies the improvements in execution time and memory of the optimised versions of the generalised encoding, showing that they can handle consistency problems among qualitative relations over 300 different input elements for calculi that support up to 5 relations. The experiments use a real-world case study that requires both qualitative and non-qualitative reasoning, showing the straightforward manner these can be combined under an ASP formalism. Additional experiments investigate the trade-off between performance in terms of execution time and memory consumption and the benefits of ASP (support of any type of reasoning, human-readability and configurability), showing that the second optimised version outperforms SparQ but lags behind GQR.

The rest of this paper is organised as follows. Section~\ref{sec:motivrel} presents a motivating case study in the domain of telecommunication networks and offers a concise analysis of related efforts on qualitative reasoning. Section~\ref{sec:qc} provides a formalisation of binary qualitative calculi, based on which we propose in Section~\ref{sec:systematic} a systematic approach for creating ASP encodings for such calculi. Section~\ref{sec:optim} proposes two optimisations for calculi that support particular algebraic properties, while Section~\ref{sec:impl} discusses the implementation of a converter tool to produce the proposed encodings, as well as a variant encoding that uses custom propagation. Section~\ref{sec:eval} presents and discusses the experimental evaluation of the proposed encodings, while Section~\ref{sec:concl} concludes and points out future research directions.

\section{Motivation and Related Work}
\label{sec:motivrel}

We begin with a case study in telecommunication networks that illustrates the motivation behind introducing an integrated ASP-based approach to qualitative reasoning. Deploying telecommunication networks involves covering specific regions with base stations (antennas) and simultaneously avoiding using the same frequencies for overlapping regions due to interference. This problem setting becomes more important and complex in the case of the emerging 5G technology, compared to previous telecommunication networks. In 5G networks, increased performance can be achieved by means of millimeter waves and this, in turn, calls for a dense network of base stations, especially in urban network deployments~\cite{nordrum2017everything}. This is due to reduced ranges and limitations in coverage (e.g. in presence of obstacles such as buildings). 


The aforementioned setting calls for solving two problems in parallel: topological arrangement of base stations and allocation of frequencies in a way that interference is avoided. The former is qualitative in nature, since it involves determining whether the coverage regions of two nearby antennas overlap and can be addressed through reasoning based on the region connection calculus (RCC). The latter involves reusing frequencies but only for non-overlapping regions and can be mapped to an instance of the graph colouring problem: graph nodes correspond to regions covered by base stations, arcs correspond to connections between overlapping regions and colours represent frequencies, which need to be different for adjacent nodes in the graph. 


When planning such a deployment, especially at large scales, an integrated representation and reasoning approach is necessary to simultaneously solve both a qualitative (spatial topology) problem and a non-qualitative one (graph colouring). In the remainder of this section, we summarise qualitative reasoning approaches that are not specific to a single qualitative calculus and explore to what extent they can address the problem described above. Qualitative reasoning problems are most often represented as constraint satisfaction problems (CSP): qualitative relations are modeled as a set of n-ary (usually binary) constraints over domain variables and consistency of the set is determined by backtracking, successively instantiating constraint variables until all are instantiated or inconsistency is detected~\cite{Renz2007}. The most prominent CSP-based toolkits are GQR~\cite{Westphal2009} and SparQ~\cite{Wolter2012}, and are discussed next. 

GQR relies on the symbolic path consistency algorithm, which successively refines constraints between variables using composition, and, in cases where this is not sufficient to decide consistency, it uses backtracking. GQR also makes use of several heuristics, such as ones based on weight and cardinality~\cite{vanBeek1996}. These allow it to achieve quick solving times for several well-known calculi such as Allen's interval algebra~\cite{Allen1981}. 
SparQ differs from GQR in that it does not restrict itself to constraint-based reasoning using path-consistency and backtracking. It provides conversions from quantitative to qualitative geometric scene descriptions, while also allowing merging different constraint sets. Finally, it supports reasoning about real-value domain variables using techniques of algebraic geometry and neighbourhood-based reasoning, where two relations are considered neighbours when one can be continuously turned into the other without a third one holding in between. SparQ has been used in a number of applications, such as a verification tool for sea navigation software that checks compliance with the International Maritime Organization collision regulations~\cite{Kreutzmann2013} and a representation and reasoning framework for learning robot manipulation tasks~\cite{Wolter2015}.

Instead of directly solving qualitative CSPs,~\citeN{Pham2008} proposed to first map them into propositional satisfiability problems, essentially converting the question of whether a CSP is consistent into the question of whether an interpretation exists that satisfies the corresponding set of Boolean formulas. This transformation allows the use of efficient SAT solvers for qualitative reasoning, with results that are at least comparable to GQR and even improve on it in cases of calculi with large numbers of relations, when combined with divide-and-conquer~\cite{Li2010} or decomposition~\cite{Huang2013} approaches.

The common characteristic of all aforementioned approaches is that they can successfully address the qualitative reasoning aspects of the case study (the RCC problem instance) but would have to resort to external mechanisms to address non-qualitative parts (the graph colouring problem instance). To address the complete case study, one would need a representation and reasoning approach that is not dedicated to a particular form of problems and can instead support both qualitative and non-qualitative reasoning natively. One potential solution could be Description Logics (DL), where support for qualitative reasoning has been implemented on top of standard DL reasoning. For instance,~\citeN{Batsakis2017} also explored several qualitative temporal and spatial representations using OWL properties and SWRL rules, on which reasoning can be performed using any state-of-the-art DL reasoner. However, the scalability of such an approach might not be guaranteed, especially as the number of rules increases substantially.


Motivated by the fact that ASP-based approaches have shown promise for solving difficult combinatorial search problems,~\citeN{Li2012} investigated the potential of qualitative reasoning using ASP encodings of Allen's interval algebra and RCC-8. While ASP solvers can also be used as SAT solvers, the authors chose to preserve logic program encodings since they are much more human-readable and configurable compared to SAT solver formats such as DIMACS. The results showed that the proposed encodings were not as fast as CSP or SAT-based approaches. Subsequent work by~\citeN{Brenton2016} and~\citeN{Baryannis2018} proposed a number of ASP encodings of qualitative calculi that improve on the ones by Li, while \citeN{Izmirlioglu2018} also proposed such encodings but only for the particular case of the cardinal direction calculus. The present work builds on the generalised ASP encoding developed by~\citeN{Baryannis2018}, proposing a family of encodings grounded in a formalisation of qualitative calculi similarly to~\citeN{Dylla2017}, that can be used to represent any binary quantitative calculus and perform reasoning including both qualitative and non-qualitative aspects as required for the motivating case study.

\section{Qualitative Calculi}
\label{sec:qc}

In this section, we provide a formalisation of qualitative calculi, closely following that of~\citeN{Dylla2017}. For the sake of simplicity, definitions are restricted to the binary case, although we acknowledge the existence of a few ternary calculi in literature, most notably the double-cross calculus~\cite{FreksaZim1992}.

\begin{definition}[\bf Binary Partition Scheme]
\label{def:partition2}
Let $\mathcal{U}$ be a universe and $\mathcal{R}$ a finite, nonempty set of binary relations (called \emph{base} relations) which are jointly exhaustive ($\mathcal{U}\times\mathcal{U}=\bigcup_{r \in \mathcal{R}} r$) and pairwise disjoint (JEPD). A \emph{binary partition scheme} is a pair $(\mathcal{U}, \mathcal{R})$, with $\mathcal{R}$ possibly containing the identity relation $id=\{ (u, u) \mid u \in \mathcal{U}\}$.
\end{definition}

\begin{definition} [\bf Binary Qualitative Calculus]
\label{def:qc2}
A \emph{binary qualitative calculus} is a tuple \linebreak $(Rel, Int, \intercal, \diamond)$, where:
\begin{itemize}[leftmargin=.34in]
    \item $Rel$ is a finite, nonempty set of binary relation symbols.
    \item $Int$ is an interpretation $(\mathcal{U}, \varphi, \cdot^{-1}, \circ)$, where
    \begin{itemize}
        \item $\varphi : Rel \rightarrow 2^{\mathcal{U}\times\mathcal{U}}$ is an injective map assigning a binary relation over $\mathcal{U}$ to each binary relation symbol in $Rel$, such that $(\mathcal{U}, \{\varphi(r) \mid r \in Rel\})$ is a binary partition scheme.
        \item $\cdot^{-1}$ is the converse operation.
        \item $\circ$ is the composition operation on binary relations.
    \end{itemize}
    \item $\intercal$ is the converse operation symbol, which represents a map $Rel \rightarrow 2^{Rel}$, such that for all $r \in Rel, \; r^\intercal = \bigcap \{ S \subseteq Rel \mid \varphi(S) \supseteq \varphi(r)^{-1} \}$
    \item $\diamond$ is the composition operation symbol which represents a map $Rel \times Rel \rightarrow 2^{Rel}$ such that for all $r, s \in Rel, \; \diamond(r, s) = \bigcap \{S \subseteq Rel \mid \varphi(S) \supseteq \circ(\varphi(r), \varphi(s))$
\end{itemize}
\end{definition}


For instance, RCC-5~\cite{Randell1992} is defined by 5 binary JEPD relations describing the possible relations between closed regions, which are illustrated in Fig.~\ref{fig:rcc5}:

\begin{itemize}
    \item Disconnected ($DR$ or $DC$): the two regions share no common area.
    \item Partial Overlap ($PO$): the two regions partially overlap
    \item Proper Part ($PP$) and its converse ($PPi$): one region is wholly within the other
    \item Equal ($EQ$): the two regions are identical.
\end{itemize}

There is only one identity relation according to Definition~\ref{def:qc2}, namely $EQ$. There is a converse operation, e.g. $PP^\intercal = PPi$. Finally, there is a composition operation inferring the relation of region \emph{a} to region \emph{c}, if we know the relation of \emph{a} to a region \emph{b} and the relation of \emph{b} to \emph{c}. 

\begin{figure}
\begin{tikzpicture}
\draw (0,3) circle (1cm) node {$a$};
\draw (0,0) circle (1cm) node {$b$};
\draw (2.8,2.2) circle (1cm) node {$a$};
\draw (2.8,0.8) circle (1cm) node {$b$};
\draw (5.6,1.5) circle (0.5cm) node {$a$};
\draw (5.6,1.5) circle (1cm);
\node at (5.6, 0.7) {$b$};
\draw (8.4,1.5) circle (0.5cm) node {$b$};
\draw (8.4,1.5) circle (1cm);
\node at (8.4, 0.7) {$a$};
\draw (11,1.5) circle (1cm) node[anchor=south] {$a$} node[anchor=north] {$b$};
\node at (0, -1.5) {$DR(a,b)$};
\node at (2.8, -1.5) {$PO(a,b)$};
\node at (5.6, -1.5) {$PP(a,b)$};
\node at (8.4, -1.5) {$PPi(a,b)$};
\node at (11, -1.5) {$EQ(a,b)$};
\end{tikzpicture}
\caption{RCC-5 relations.}
\label{fig:rcc5}
\end{figure}
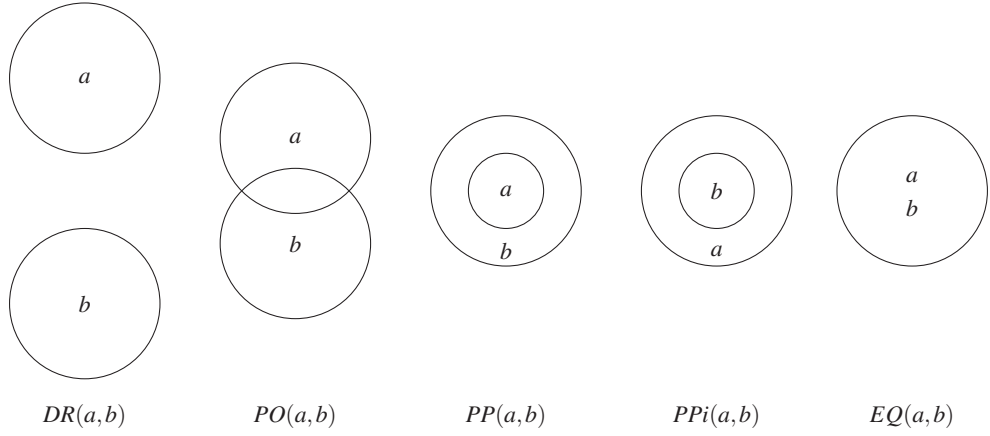

The composition operation of a binary qualitative calculus can be defined through a two-dimensional \emph{composition table} containing the results of the composition of each pair of relations. The full composition table for RCC-5 is provided in Table~\ref{sec:encod}. Using this composition table, we can determine whether a given set of constraints on relations of particular domain elements can actually exist (i.e. is consistent). This problem is formalised in the next definition, following the equivalent definition in~\citeN{Baryannis2018}. Note that in Definition~\ref{def:qc2} the composition and converse operations are of the \emph{weak} variant in~\citeN{Dylla2017}, since all existing binary calculi define at least weak operations.

\begin{table}[ht]
\caption{RCC-5 Composition Table}
\centering
\begin{tabular}{  l  l  l  l  l  l }
	\hline\hline
	Relations & DR & PO & PP & PPi & EQ \\  [0.5ex]
	\hline
	DR & All & DR, PO, PP & DR, PO, PP & DR & DR \\ 
	PO & DR, PO, PPi & All & PO, PP & DR, PO, PPi & PO \\ 
	PP & DR & DR, PO, PP & PP & All & PP\\ 
	PPi & DR, PO, PPi & PO, PPi & EQ, PO, PP, PPi & PPi & PPi \\ 
	EQ & DR & PO & PP & PPi & EQ  \\
    \hline\hline
\end{tabular}
\label{tab:composition_table}
\end{table}

\begin{definition} [\bf Binary Qualitative Calculus Model Existence]
\label{def:problem2}
Let $QC$ be a binary qualitative calculus according to Definition~\ref{def:qc2} and $CT$ the corresponding two-dimensional composition table. Let $V$ be a set of variables ranging over universe $\mathcal{U}$ and $C$ a set of constraint formulas $R(x, y)$ with $x, y \in V$ and $R \subseteq Rel$. Deciding whether a model of $QC$ exists given $C$ is the task to decide whether there is an assignment $\psi: V \rightarrow \mathcal{U}$, such that: (1) $(\psi(x), \psi(y)) \in \varphi(R)$ for all constraints in $C$; (2) if $QC$ includes the identity relation $id$, $(\psi(x), \psi(x)) \in \varphi(id)$; (3) for at least one relation $R_C$ in cell $(R_1, R_2)$ of $CT$, if $(\psi(x), \psi(y)) \in \varphi(R_1)$ and $(\psi(y), \psi(z)) \in \varphi(R_2)$, then $(\psi(x), \psi(z)) \in \varphi(R_C)$.
\end{definition}


\citeN{Dylla2017} provide a detailed analysis of algebraic properties of existing binary qualitative calculi and produce a hierarchy that groups them depending on which properties are satisfied, ranging from Boolean algebras (least amount of properties supported) to full relation algebras (maximum amount of properties supported). We focus here only on two of these properties that have a direct effect on reasoning with ASP and on which we rely to propose the optimisations in Section~\ref{sec:optim}.

\begin{definition} [\bf Involution of converse]
\label{def:r7}
Let $r$ be a binary qualitative relation symbol and $\intercal$ the converse operation symbol. $\intercal$ satisfies the involution property iff $(r^\intercal)^\intercal = r$. 
\end{definition}

Note that the special case of a symmetric relation always satisfies converse involution, since, by definition, a relation symbolised by $r$ is symmetric iff $r^{\intercal} = r$. All relations of RCC-5 satisfy the involution property, since $DR, EQ$ and $PO$ are symmetric and $(PP^\intercal)^\intercal = PPi^\intercal = PP$.

\begin{definition} [\bf Identity Law]
\label{def:r6}
Let $r$ be a binary qualitative relation symbol, $id$ the identity relation and $\diamond$ the binary composition operation symbol as in Definition~\ref{def:qc2}. 
The relation symbolised by $r$ satisfies the identity law iff $r \diamond id = r$. 
\end{definition}

The identity law is relation-dependent, hence a qualitative calculus may include both relations that satisfy it and relations that violate it. All relations in RCC-5 satisfy it, as can be seen in its composition table.

\section{Systematic Approach for Encoding Qualitative Calculi in ASP}
\label{sec:systematic}

In this section, we describe a systematic approach for generating an ASP encoding applicable for any binary qualitative calculus that is described by Definitions~\ref{def:partition2} and~\ref{def:qc2} in the previous section. The approach covers all elements necessary to express model existence problem instances according to Definition~\ref{def:problem2}, namely the domain variables, the contents of the composition table, the way to search for possible models and the constraints that restrict this search.

\subsection{Domain and Base Relations}
\label{sec:domrel}

The first step is to represent the domain, i.e. the elements for which qualitative relations are defined. For this, we define a unary predicate \textit{element}. Any element $x$ that is modelled by the calculus (e.g. regions in RCC), is expressed using a fact $element(x)$. Then, to represent the base relations of the calculus, we define a unary predicate \textit{relation}, which is instantiated for each relation included in the calculus. Relation instantiations can be encoded compactly using term pooling, e.g. for RCC-5 a single fact \verb|relation(dr; eq; po; pp; ppi)| suffices. Instantiating the relation predicate only for the base relations of each calculus enforces the property that relations are jointly exhaustive, according to Definition~\ref{def:partition2}.

\subsection{Composition Table and Search Space}
\label{sec:ctspace}

To encode composition table entries, we first define a \textit{table} predicate with three arguments representing the row relation, column relation, and a valid relation for the composition of the latter two, according to Definition~\ref{def:qc2}. For each cell in the composition table, as many table predicates are instantiated as the possible relations included in that cell. Term pooling can again lead to more compact representation, e.g. for the composition of $DR$ with $PO$ in RCC-5, the following fact is enough: \verb|table(dr, po, (dr;po;pp)).| Collectively, the element, relation and table predicates represent the axiomatic knowledge within a qualitative calculus.

To implement qualitative reasoning for the calculus encoded so far, we first need to ensure that the encoding takes into account that relations are pairwise disjoint, as in Definition~\ref{def:partition2}. We first define a ternary predicate \textit{true}, with $true(X,R,Y)$ denoting that relation R holds for the (ordered) pair (X, Y). Then, we include a choice rule with a conditional literal\footnote{Note that conditional literals are not in ASP-Core-2~\cite{Calimeri2020}, the standard ASP input language format.}: 

\begin{equation}
\label{eq:choice}
\{true(X,R,Y) : relation(R)\} = 1 \leftarrow element(X), element(Y), X != Y.
\end{equation}

which states that for any pair of distinct elements $X$ and $Y$, only one of the instantiated relations can hold. The conditional literal allows for a more convenient and compact representation for conjunctions with variable numbers of literals. Assuming that one of the relations is the identity relation $id$ (as in Definition~\ref{def:partition2}), we also need to encode the case of constraints on a single element, where only the identity relation can hold: $true(X,id,X) \leftarrow element(X)$.

To ensure that any relations that violate the composition table are excluded, we employ the following integrity constraint, again using a conditional literal:

\begin{equation}
\label{eq:ic}
\leftarrow true(X,R_1,Y),\ true(Y,R_2,Z),\ not\ true(X,R_\mathit{out},Z) : table(R_1,R_2,R_\mathit{out}).    
\end{equation}

The semantics of the integrity constraint is that if $R_1$ and $R_2$ hold for $(X,Y)$ and $(Y,Z)$, respectively, then there must be at least one table entry $(R_1,R_2, R_\mathit{out})$ such that $R_{out}$ holds for $(X,Z)$.

\subsection{Input Constraints}
\label{sec:input}


The encoding presented so far is a complete representation of any binary qualitative calculus, defined according to Definitions~\ref{def:partition2} and~\ref{def:qc2}. In order to be able to solve instances of the model existence problem according to Definition~\ref{def:problem2}, we also need to represent input constraints, i.e. relations that may hold among elements according to our knowledge. For this we introduce a ternary predicate \textit{constraint}, with $constraint(X,R,Y)$ denoting that the pair $(X,Y)$ is involved in a constraint, and $R$ is a possible relation for that pair. An additional integrity constraint is required to prevent constraints that violate the composition table:

\begin{equation}
\label{eq:ic2}
\leftarrow constraint(X,\_,Y), not\ true(X,R,Y) : constraint(X,R,Y).
\end{equation}


\subsection{Correctness and Complexity}


The correctness of the generalised encoding that results from applying the presented systematic approach (hereafter referred to as GEN-0) is addressed by the following theorems (proofs are included in~\ref{sec:proofs}).

\begin{restatable}[\bf Soundness]{theorem}{ThmS} 
\label{thm:sound}
Let $Q = \langle QC, C \rangle$ be an instance of the model existence problem according to Definition~\ref{def:problem2}. Let $\Pi_Q$ be the GEN-0 encoding of $Q$ and $S$ be an answer set of $\Pi_Q$. Then, there exists a model $M$ of $Q$ that corresponds to $S$.
\end{restatable}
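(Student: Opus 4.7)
The plan is to construct a model $M$ of $Q$ directly from the answer set $S$, by reading off, for every pair of variables in $V$, the unique base relation chosen by $S$, and then verifying that the resulting labelling satisfies each of the three conditions of Definition~\ref{def:problem2}.

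First, I would exploit the choice rule~(\ref{eq:choice}): since its cardinality bound is exactly $1$, for every pair $(x,y)$ of distinct elements with $\mathit{element}(x), \mathit{element}(y)\in S$ there is a unique base relation $R_{xy}\in Rel$ such that $\mathit{true}(x,R_{xy},y)\in S$. The rule $\mathit{true}(X,\mathit{id},X)\leftarrow\mathit{element}(X)$ forces $R_{xx}=\mathit{id}$, so condition~(2) of Definition~\ref{def:problem2} is immediate whenever the calculus contains identity.

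Second, I would discharge condition~(3). For any triple $x,y,z$, both $\mathit{true}(x,R_{xy},y)$ and $\mathit{true}(y,R_{yz},z)$ lie in $S$, so the composition integrity constraint~(\ref{eq:ic}) forces some $R_\mathit{out}$ with $\mathit{table}(R_{xy},R_{yz},R_\mathit{out})$ together with $\mathit{true}(x,R_\mathit{out},z)\in S$; by the uniqueness established in the first step, $R_\mathit{out}=R_{xz}$, and so $R_{xz}\in\diamond(R_{xy},R_{yz})$ in the sense of Definition~\ref{def:qc2}. For condition~(1), each input constraint $R(x,y)\in C$ is translated into the set of facts $\{\mathit{constraint}(x,R',y):R'\in R\}$, and the integrity constraint~(\ref{eq:ic2}) then forces $R_{xy}\in R$, so the labelling agrees with every input constraint.

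The main obstacle I anticipate is passing from the verified labelling $\{R_{xy}\}$ to a bona fide assignment $\psi\colon V\to\mathcal{U}$ witnessing the three conditions in the underlying universe, because composition-table consistency of a complete scenario is not, in general, equivalent to realizability for arbitrary binary calculi. The cleanest route is to read Definition~\ref{def:problem2} structurally: the JEPD property of Definition~\ref{def:partition2} makes the base relation holding between any $\psi(x)$ and $\psi(y)$ unique, and the semantics of $\diamond$ in Definition~\ref{def:qc2} guarantees that this unique relation is always in the appropriate cell of the composition table, so the labelling derived from $S$ directly determines a model $M$ of $Q$ corresponding to $S$.
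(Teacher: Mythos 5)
Your proof is correct in substance but takes the opposite direction from the paper's. The paper argues by contradiction: it assumes no corresponding model exists and then enumerates the ways $S$ could contain an atom violating the definition of one of the predicates ($\mathit{relation}$, $\mathit{table}$, $\mathit{element}$, $\mathit{constraint}$, $\mathit{true}$), concluding that each case contradicts $S$ being an answer set of $\Pi_Q$. You instead construct the model directly, reading off the unique labelling $R_{xy}$ guaranteed by the cardinality-$1$ choice rule~(\ref{eq:choice}) and then checking conditions (1)--(3) of Definition~\ref{def:problem2} against the integrity constraints~(\ref{eq:ic}) and~(\ref{eq:ic2}). Your route is the more informative one: it makes explicit exactly which rule of $\Pi_Q$ discharges which condition of the definition, whereas the paper's case analysis leaves the correspondence between $S$ and $M$ largely implicit. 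The obstacle you flag at the end --- that composition-table consistency of a complete scenario is not, for arbitrary binary calculi, equivalent to realizability by an actual assignment $\psi\colon V\to\mathcal{U}$ --- is a genuine subtlety that the paper's proof does not address either; the paper implicitly adopts the same ``structural'' reading of Definition~\ref{def:problem2} that you propose, under which a model is identified with a labelling satisfying conditions (1)--(3) rather than with a concrete witness in the universe. Your resolution is therefore consistent with the paper's intent, and making that issue explicit rather than eliding it is a strength of your write-up, not a defect.
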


\begin{restatable}[\bf Completeness]{theorem}{ThmC} 
\label{thm:complete}
Let $Q = \langle QC, C \rangle$ be an instance of the model existence problem according to Definition~\ref{def:problem2} and $M$ a model of $Q$. Let $\Pi_Q$ be the GEN-0 encoding of $Q$. Then, there exists an answer set S of $\Pi_Q$ that corresponds to $M$.
\end{restatable}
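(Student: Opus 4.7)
The plan is to prove completeness constructively: starting from a model $M$ of $Q$ witnessed by an assignment $\psi: V \to \mathcal{U}$, I would explicitly build a candidate set of atoms $S$ and then verify that it is indeed an answer set of $\Pi_Q$. The construction is the natural inverse of the correspondence used in the soundness direction. For every ordered pair of distinct variables $(x,y) \in V \times V$, the JEPD property guaranteed by Definition~\ref{def:partition2} ensures that there is a unique base relation $r_{xy} \in Rel$ with $(\psi(x),\psi(y)) \in \varphi(r_{xy})$, and I include the atom $true(x,r_{xy},y)$ in $S$. To this I add all factual atoms generated by the encoding ($element$, $relation$, $table$, and $constraint$ facts as described in Sections~\ref{sec:domrel}--\ref{sec:input}), together with the atoms $true(x,id,x)$ for each $x \in V$ whenever the calculus contains an identity relation.

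Next, I would verify that $S$ satisfies every rule of $\Pi_Q$. The choice rule of Equation~\ref{eq:choice} is satisfied trivially because, by JEPD, exactly one base relation is selected per ordered pair of distinct elements. The identity atom rule is satisfied by construction together with condition~(2) of Definition~\ref{def:problem2}. For the composition-table integrity constraint of Equation~\ref{eq:ic}, whenever $true(x,r_1,y)$ and $true(y,r_2,z)$ belong to $S$, condition~(3) of Definition~\ref{def:problem2} produces some $r_c$ in cell $(r_1,r_2)$ of $CT$ with $(\psi(x),\psi(z)) \in \varphi(r_c)$; by the uniqueness coming from JEPD this $r_c$ must coincide with $r_{xz}$, so $true(x,r_c,z) \in S$ and the body of the integrity constraint is falsified. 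Finally, the input-constraint integrity of Equation~\ref{eq:ic2} follows from condition~(1): for each input constraint $R(x,y) \in C$ we have $r_{xy} \in R$, so both $constraint(x,r_{xy},y)$ and $true(x,r_{xy},y)$ are in $S$, preventing the constraint from firing.

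The last step is to argue that $S$ is a minimal model of the reduct $\Pi_Q^S$, hence an answer set. Since the only rule producing $true$ atoms is the choice rule and our construction supplies exactly one head atom per instantiation of its condition, no strict subset of $S$ satisfies the cardinality constraint; all remaining atoms are forced as facts. I anticipate the main obstacle to be notational rather than conceptual: one has to handle carefully the mild abuse of notation in Definition~\ref{def:problem2}, where $\varphi(R)$ for a set $R$ of relation symbols is to be read as $\bigcup_{r \in R} \varphi(r)$, and fix the precise meaning of a model $M$ \emph{corresponding} to an answer set $S$ so that it matches the one implicitly adopted in Theorem~\ref{thm:sound}. Once this correspondence is pinned down via the JEPD-driven lookup $r_{xy}$, the three conditions of Definition~\ref{def:problem2} align one-to-one with the three groups of rules of $\Pi_Q$, and the verification becomes routine.
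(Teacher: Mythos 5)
Your proposal is correct and follows essentially the same route as the paper's proof: construct $S$ from the assignment $\psi$ by adding all facts, the identity atoms, and one $true(x,r,y)$ atom per ordered pair (unique by JEPD), then check that all rules are satisfied and that no proper subset survives. You spell out the rule-by-rule verification and the role of JEPD more explicitly than the paper does, which only strengthens the argument.
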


Establishing correctness of GEN-0 also has the corollary of identifying the complexity class for the problem of Definition~\ref{def:problem2}. A proof can be found in~\ref{sec:proofs}.

\begin{restatable}[\bf QC Model Existence Complexity]{corollary}{CorC} 
\label{thm:complexity}
The QC model existence problem according to Definition~\ref{def:problem2} is \textsc{NP}-complete.
\end{restatable}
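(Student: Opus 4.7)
The plan is to establish the two inclusions separately, membership in \textsc{NP} and \textsc{NP}-hardness, and to leverage the soundness/completeness results already proved for GEN-0 to keep the argument compact.

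For membership in \textsc{NP}, I would give a direct guess-and-check argument rather than reasoning about ASP complexity. Given an instance $Q = \langle QC, C\rangle$ with variables $V$, a certificate is a map $\tau : V \times V \rightarrow Rel$ assigning exactly one base relation to each ordered pair of variables; this has size polynomial in $|V|$ and $|Rel|$. Verification consists of three polynomial-time checks: (i) for every constraint $R(x,y) \in C$, $\tau(x,y) \in R$; (ii) if $id \in Rel$, then $\tau(x,x) = id$ for every $x \in V$; and (iii) for every triple $(x,y,z)$, the entry $CT(\tau(x,y), \tau(y,z))$ in the composition table contains $\tau(x,z)$. Since $|V|^3$ triples are inspected and each lookup is constant-time, the total cost is polynomial. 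Alternatively, membership follows at one remove from Theorems~\ref{thm:sound} and~\ref{thm:complete}: GEN-0 is a normal program augmented only by a cardinality-constrained choice rule and integrity constraints (no disjunction in rule heads), and its brave reasoning problem lies in \textsc{NP}.

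For \textsc{NP}-hardness, rather than constructing a fresh reduction, I would appeal to well-established results showing that consistency is already \textsc{NP}-hard for particular binary qualitative calculi subsumed by Definition~\ref{def:qc2}; the classical witnesses being RCC-5, RCC-8, and Allen's interval algebra (see Renz and Nebel, and predecessors). The point is that for any such fixed calculus $QC_0$, its consistency problem is by definition a restriction of the QC model existence problem to inputs whose calculus component equals $QC_0$, and $QC_0$ itself can be written down in constant size as part of the encoding. Hence the known \textsc{NP}-hard consistency problem many-one reduces to the problem of Definition~\ref{def:problem2} via the identity map on constraint sets (prefixed by the fixed description of $QC_0$), establishing \textsc{NP}-hardness.

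Combining the two directions yields the corollary. The only delicate point, and the step I would be most careful about, is making sure that the reduction witnessing hardness is clean: I want to cite a calculus whose \textsc{NP}-hardness proof uses exactly the flavour of consistency considered in Definition~\ref{def:problem2} (relations over the same universe, base-relation consistency, not some stronger realisability notion), so that no additional translation is needed. Using RCC-5 or RCC-8 with the standard NP-hardness reduction from 3SAT satisfies this requirement directly, so no extra machinery is introduced.
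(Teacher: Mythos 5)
Your proof is correct and, for the hardness half, coincides with the paper's: the paper simply observes that RCC-8 is an instance of the problem and cites Renz and Nebel (1999) for its \textsc{NP}-hardness, which is exactly your restriction-plus-identity-reduction argument spelled out more carefully. The difference is in the membership half. The paper derives membership in \textsc{NP} indirectly, from Theorems~\ref{thm:sound} and~\ref{thm:complete} together with the fact that answer-set existence for programs consisting only of choice rules and integrity constraints is in \textsc{NP} (citing Marek--Truszczy\'nski and Cadoli--Schaerf); your primary route is a self-contained guess-and-check argument with the atomic assignment $\tau : V\times V \to Rel$ as certificate, and you mention the paper's ASP-based route only as an alternative. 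Your direct argument is more elementary and does not depend on the correctness theorems for GEN-0, which is a genuine advantage (the corollary then stands independently of the encoding); note, though, that your certificate witnesses \emph{algebraic closure} of an atomic network, and identifying that with the existence of the assignment $\psi: V \to \mathcal{U}$ of Definition~\ref{def:problem2} relies on the same tacit identification of ``model'' with ``closed atomic refinement'' that the paper itself makes in Theorems~\ref{thm:sound} and~\ref{thm:complete} --- so your argument is at the same level of rigour as the paper's, just with the assumption surfaced in a different place. The care you take that the cited hardness result concerns base-relation consistency rather than a stronger realisability notion is a point the paper does not make explicit.
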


\section{Optimisations based on Algebraic Properties}
\label{sec:optim}


GEN-0 is designed to be applicable to any qualitative calculus, without making any assumptions about additional properties that may hold (such as the ones in Definitions~\ref{def:r7} and~\ref{def:r6}). This allows GEN-0 to be applicable to calculi which correspond to weakly associative or associative Boolean algebras~\cite{Dylla2017}, such as the Rectangle Cardinal Direction (RCD) calculus~\cite{Navarrete2013} and the connected variant of the Cardinal Direction Constraints (CDC) calculus~\cite{Skiadopoulos2005}. While GEN-0 affords maximum applicability, this inevitably results in decreased performance. However, the vast majority of known qualitative calculi are not as restrictive and support one or both of the involution or identity law properties. For these cases, two optimisations are considered in the sequel.

\subsection{Antisymmetric optimisation}

For relations $r$ that satisfy the involution property of Definition~\ref{def:r7}, $r(X,Y) \equiv r^{-1}(Y,X)$. This means that there is no need to consider both pairs $(X,Y)$ and $(Y,X)$ in the choice rule. Only one of them needs to be included, while the other can be generated afterwards, to preserve consistency. This optimisation essentially reduces possible relation pairs by half and is referred to by~\citeN{Brenton2016} as the \emph{antisymmetric optimisation}. This requires the following two changes:

\begin{itemize}
    \item replace $X!=Y$ with $X<Y$ in the choice rule (\ref{eq:choice}) described in Section~\ref{sec:ctspace} and require that all relation predicates included in the predefined rules and facts have a first operand that is arithmetically (or lexicographically) before the second one.
    \item similarly add conjuncts $X<Y$ and $Y<Z$ to the integrity constraint (\ref{eq:ic}) that enforces the composition table, described at the end of Section~\ref{sec:ctspace}.
    \item add rules of the form $true(Y,ri,X) \leftarrow true(X,r,Y), X < Y$, where $ri$ stands for $r^{-1}$, for every pair of converse relations, ensuring that the converse pair is also generated.
\end{itemize}

Note that the latter rule is only required for relations that are not symmetric. In the symmetric case, there is only one relation as the converse of it is the same relation. For calculi where all relations are symmetric (e.g. the TC-6 variant of the Trajectory Calculus~\cite{Baryannis2018}), the choice rule is the only part of the encoding that needs changing to apply the antisymmetric optimisation. The resulting encoding (GEN-1) is suitable for calculi which correspond to any Boolean algebras that support the involution of converse such as the QTC-C variant of the Qualitative Trajectory Calculus~\cite{Weghe2005} or to relation algebras without the identity law, such as the QTC-B variant.

\subsection{Identity optimisation}

The majority of qualitative calculi, as analysed in~\citeN{Dylla2017}, correspond to full relational algebras, satisfying, among others, both the involution of converse property of Definition~\ref{def:r7} and the identity law of Definition~\ref{def:r6}. Calculi in this case include the well-known Allen's interval algebra and all variants of RCC. The identity law allows for a further optimisation, resulting into a third generalised encoding (GEN-2) which requires the following changes:

\begin{itemize}
    \item the integrity constraint (\ref{eq:ic}) in Section~\ref{sec:ctspace} is replaced by the following: $\leftarrow true(X,R_1,Y), \;\\X < Y, \; true(Y,R_2,Z), \; Y < Z, \; R_1 != id, \; R_2 !=id, \; not \ true(X,R_\mathit{out},Z) : table(R_1,R_2,R_\mathit{out})$.
    \item the following two integrity constraints are added: 
    \begin{itemize}
        \item $\leftarrow true(X,id,Y), \; true(Y,R,Z), \; not \ true(X,R,Z), \; Y < Z.$
        \item $\leftarrow true(X,R,Y), \; true(Y,id,Z), \; not \ true(X,R,Z), \; X < Y.$
    \end{itemize}
\end{itemize}

These changes slightly reduce the number of integrity constraints generated based on the composition table, since according to the identity law, composing any relation with the identity relation, always results in the original relation (and no other relations). As discussed in the next section, the integrity constraint that enforces the composition table is the most costly part of the encoding, so any reduction in the table size improves grounding time, especially for large-scale problems. Both the antisymmetric and identity optimisations do not affect correctness, since the correspondence between elements of the model existence problem and elements of the ASP encoding remains identical. Hence, Theorems~\ref{thm:sound} and~\ref{thm:complete} hold for GEN-1 and GEN-2.

Note that there is no known qualitative calculus where the identity law applies but not the involution of converse property. Hence, GEN-2 includes the antisymmetric optimisation. Also, to the best of our knowledge, the remaining properties of relation algebras (such as associativity and distributivity of the composition relation) do not seem capable of contributing to further optimisations. 

\section{Implementation}
\label{sec:impl}


We have explored the applicability of the proposed approach by generating GEN-0, GEN-1 and GEN-2 encodings for several well-known qualitative calculi. To aid in this process and also provide interoperability with existing systems, we have implemented a converter tool that automatically generates ASP encodings from input files in the format accepted by existing toolkits. An initial prototype of the tool currently accepts GQR files as input, containing information about the number of calculus relations, the composition table and a description of identity and converse relations. We intend to support SparQ files as input in a future version. The tool can generate encodings for any binary qualitative calculus (a complete list can be found in~\cite{Dylla2017}). Indicatively, we have generated encodings for Allen's interval algebra, RCC-5 and RCC-8, RCD~\cite{Navarrete2013} and QTC~\cite{Weghe2005}. For the latter two, we had to create GQR input files out of ones available for SparQ, since to the best of our knowledge no GQR implementation for RCD or QTC exists. The resulting encodings are ASP programs that offer a complete representation of a given calculus, which can then be combined with a representation of a particular problem instance (using \textit{element} and \textit{constraint} predicates). All encodings and relevant source code are available at https://github.com/gmparg/ICLP2020.

\subsection{Custom Propagation}
\label{sec:customprop}

The proposed encodings have been implemented with generalisation and applicability in mind, ensuring that they work for any binary qualitative calculus, as shown in the previous section. This inevitably has an effect on performance. A heavy load is imposed on the grounding process due to the rule that generates the integrity constraints that encode the composition table (e.g. the following rule in GEN-2: $\leftarrow true(X,R_1,Y), \; X < Y, \; true(Y,R_2,Z), \; Y < Z, \; R_1 != id, \; R_2 !=id, \; not \ true(X,R_\mathit{out},Z) : table(R_1,R_2,R_\mathit{out})$. All generated constraints have to be grounded for all input elements. This becomes quite complex as input elements increase, especially for calculi with a large number of relations. To reduce grounding effort, we explored custom theory propagation as supported in clingo. Instead of the aforementioned rule, a custom propagator that monitors true/3 literals is implemented. When a true/3 literal changes its truth value, it is checked against the composition table in combination with other such literals. If a violation is found, a nogood is added. The propagator operates on partial assignments according to Algorithm~\ref{alg:prop}. We implemented a Python version using clingo's Python API\footnote{https://potassco.org/clingo/python-api/current/}. Encodings that use the custom propagator are labelled as GEN-3 in the experiments that follow.


\begin{algorithm}[!ht]
\caption{Custom Propagator for Enforcing Composition Table}   \label{alg:prop}
\begin{algorithmic}[1]
\REQUIRE List of grounded true/3 literals $T$, List of grounded table/3 literals $CT$

\FORALL{literals $t \in T$}
    \IF{$t$ not yet assigned truth value}
        \STATE Add $t$ to watchlist $W$
    \ENDIF
\ENDFOR

\FORALL{literals $xy \in W$ of the form $true(X,R_1,Y)$ that change value}
    \FORALL{literals $yz \in T$ of the form $true(Y,R_2,Z)$}
        \IF{$yz$ true}
            \STATE Set flag to false
            \FORALL{literals $r \in CT$ of the form $table(R_1,R_2,R_{out})$}
                \IF{there exists literal $xz \in T$ assigned to true of the form $true(X,R_{out},Z)$}
                    \STATE Set flag to true
                \ENDIF
            \ENDFOR
            \IF{flag still false}
                \STATE Add nogood with the combination of $xy$ and $yz$
            \ENDIF
        \ENDIF
    \ENDFOR
\ENDFOR
\end{algorithmic}
\end{algorithm}

\section{Experiments}
\label{sec:eval}


We conducted two sets of experiments to evaluate the performance of the proposed encodings. The first set, discussed in Section~\ref{sec:expenc}, focuses on quantifying the improvement of the optimisations introduced in GEN-1 and GEN-2, compared to GEN-0, as well as GEN-3 (the version with custom propagation discussed in Section~\ref{sec:customprop}). The second set of experiments, discussed in Section~\ref{sec:expcomp}, focuses on determining how the best performing encoding identified in the first set of experiments fares against dedicated qualitative reasoning systems, specifically GQR and SparQ. This helps quantifying the trade-off between using ASP in order to allow both qualitative and other forms of reasoning, and any effects in performance.

\subsection{Dataset and Processing}

Both sets of experiments were based on the motivating case study described at the beginning of Section~\ref{sec:motivrel}. For locations of antennas, we used the registered antenna structures dataset\footnote{https://hifld-geoplatform.opendata.arcgis.com/datasets/antenna-structure-registrate} extracted from the FCC's Antenna Structure Registration system, which contains 124,811 entries for antennas across USA. In order to reduce the scope of the demonstration to a plausible example, the original dataset was clipped using the official Los Angeles county boundaries\footnote{https://data.ca.gov/dataset/ca-geographic-boundaries}. Each antenna is represented as a geographical feature of point type (latitude, longitude) along with additional properties including unique identifier and the county where it belongs to. The aforementioned data was projected in the World Mercator Projection (EPGS:3857) and then subjected to spatial processing as analysed below.

Assuming a 1000-feet (300m) range for a high-band 5G antenna~\cite{Horwitz2019}, each point in the dataset was converted to a polygon, using a fixed radius buffer. These polygons are identified by the unique identifier of the antenna where they originate from and represent the area of coverage of a corresponding areal transmitter. The generated regions are then combined in a pairwise and exhaustive manner; every pair has its intersection matrix generated (using the DE-9IM 3x3 matrix model~\cite{clementini1995}) along with the representative mask code, which eventually is used to retrieve the RCC-5 base relation that holds between two regions. The following standard mapping from DE-9IM to RCC-5 is used: DR when DE-9IM is disjoint or touches; PO for overlap; PP for covered by; PPi for covers; EQ for identical. The end result of the process is a list of paired regions of coverage associated with an RCC-5 relation. Note that for the sake of simplicity we excluded pairs comprised by a single region or pairs whose inverse counterpart has already been evaluated.


The processed dataset as well as all source code used in the experiments can be accessed at https://github.com/gmparg/ICLP2020. Partial encodings are also included in~\ref{sec:encod}. The software used for the experiments included: (1) the ASP system clingo version 5.4.0~\cite{Gebser2016}, using the implementation that supports Python, necessary for GEN-3; (2) GQR version 1500~\cite{Westphal2009}\footnote{Compiled from https://github.com/m-westphal/gqr}; and (3) the only available SparQ version~\cite{Wolter2012}\footnote{Compiled from https://github.com/dwolter/SparQ}. Grounding times are obtained by running clingo in gringo mode. Time and memory values were calculated using pyrunlim\footnote{https://github.com/alviano/python/tree/master/pyrunlim}. All experiments were performed on a Debian Linux server with an Intel\textregistered~Xeon\textregistered~X3430 CPU at 2.4GHz, with 16 GB RAM.

\subsection{Performance of Encodings}
\label{sec:expenc}

The first set of experiments examines how each of the four versions of the generalised encoding scale by solving the problem of assigning frequencies without interference for an increasing number of antennas, ranging from 10 to 300. A partial input is provided, with only one known relation for each distinct region. The qualitative aspect of the problem is solved by attempting to derive a consistent solution by assigning relations to the remaining region pairs. The non-qualitative aspect of ensuring that no two overlapping regions share the same frequencies is mapped to an instance of the 3-colouring problem, by considering a graph where only overlapping regions are connected. The encoding of the 3-colouring problem in ASP used in the experiments is provided in \ref{sec:encod}.

\begin{figure}
\centering
\subfloat[]{
\begin{tikzpicture}[node distance = 2cm, scale=0.8, transform shape]
\begin{axis}[xlabel= Regions,ylabel= CPU time (s),legend pos= north west]
    \addplot+[color=color1,mark=otimes, mark size=2,error bars/.cd, y dir=both,y explicit] coordinates {
    (10,0.11)
    (20,0.76)
    (30,2.89)
    (40,6.75)
    (50,13.93)
    (60,24.66)
    (70,40.89)
    (80,63.25)
    (90,90.75)
    (100,125.42)
    (110,168.37)
    (120,222.83)
    (130,278.52)    
    (140,356.43)    
    (150,449.06)    
    (160,536.95)
    };
    \addplot+[color=color2,mark=x, mark size=2,error bars/.cd, y dir=both,y explicit] coordinates {
    (10,0)
    (20,0.1)
    (30,0.53)
    (40,1.41)
    (50,2.68)
    (60,4.58)
    (70,7.28)
    (80,10.86)
    (90,15.99)
    (100,20.6)
    (110,28.71)
    (120,37.85)
    (130,50.01)
    (140,58.14)
    (150,73.43)
    (160,89.7)
    (170,114.22)
    (180,130.55)
    (190,159.21)
    (200,173.47)
    (210,207.33)
    (220,242.32)
    (230,275.28)
    (240,322.02)
    (250,357.98)
    (260,408.74)
    (270,473.75)
    (280,495.66)
    (290,549.94)
    (300,646.36)   
    };
    \addplot+[color=color3,mark=oplus, mark size=2, error bars/.cd, y dir=both,y explicit] coordinates {
    (10,0)
    (20,0.1)
    (30,0.54)
    (40,1.19)
    (50,2.46)
    (60,4.37)
    (70,6.76)
    (80,10.35)
    (90,14.96)
    (100,20.1)
    (110,26.69)
    (120,36.84)
    (130,49.02)
    (140,57.16)
    (150,70.37)
    (160,86.66)
    (170,113.18)
    (180,130.55)
    (190,153.05)
    (200,167.4)
    (210,201.17)
    (220,236.23)
    (230,267.99)
    (240,316.99)
    (250,353.98)
    (260,389.1)
    (270,457.18)
    (280,473)
    (290,533.48)
    (300,621.73)
    };
    \addplot+[color=color4,mark=+, mark size=2, error bars/.cd, y dir=both,y explicit] coordinates {
    (10,0)
    (20,0.1)
    (30,0.99)
    (40,0.99)
    (50,4.597)
    (60,9.822)
    (70,18.978)
    (80,38.684)
    (90,53.384)
    (100,39.218)
    (110,71.825)
    (120,536.692)
    (130,308.222)
    (140,193.779)
    (150,368.682)
    (160,334.692)
    };
    \legend{GEN-0, GEN-1, GEN-2, GEN-3}
    \end{axis}
\end{tikzpicture}
\label{fig:eval-gen-time}
}\quad
\subfloat[]{
\begin{tikzpicture}[node distance = 2cm, scale=0.8, transform shape]
\begin{axis}[xlabel= Regions,ylabel= Memory (GB),legend pos= north west]
    \addplot+[color=color1,mark=otimes, mark size=2,error bars/.cd, y dir=both,y explicit] coordinates {
    (10,0)
    (20,0.044)
    (30,0.115)
    (40,0.526)
    (50,0.512)
    (60,0.834)
    (70,1.260)
    (80,1.971)
    (90,2.682)
    (100,3.678)
    (110,5.025)
    (120,6.563)
    (130,8.061)
    (140,9.718)
    (150,12.312)
    (160,14.406) 
    };
    \addplot+[color=color2,mark=x, mark size=2,error bars/.cd, y dir=both,y explicit] coordinates {
    (10,0)
    (20,0.017)
    (30,0.037)
    (40,0.065)
    (50,0.115)
    (60,0.179)
    (70,0.269)
    (80,0.410)
    (90,0.549)
    (100,0.719)
    (110,0.966)
    (120,1.219)
    (130,1.566)
    (140,1.877)
    (150,2.263)
    (160,2.703)
    (170,3.356)
    (180,3.888)
    (190,4.474)
    (200,5.109)
    (210,6.087)
    (220,6.907)
    (230,7.822)
    (240,8.850)
    (250,9.882)
    (260,10.993)
    (270,12.757)
    (280,14.012)
    (290,15.344)    
    (300,16.762)
    };
    \addplot+[color=color3,mark=oplus, mark size=2, error bars/.cd, y dir=both,y explicit] coordinates {
    (10,0)
    (20,0.017)
    (30,0.034)
    (40,0.065)
    (50,0.112)
    (60,0.169)
    (70,0.255)
    (80,0.398)
    (90,0.529)
    (100,0.702)
    (110,0.945)
    (120,1.184)
    (130,1.524)
    (140,1.845)
    (150,2.230)
    (160,2.661)
    (170,3.284)
    (180,3.810)
    (190,4.416)
    (200,5.050)
    (210,6.023)
    (220,6.842)
    (230,7.744)
    (240,8.714)
    (250,9.746)
    (260,10.853)
    (270,12.664)
    (280,13.907)
    (290,15.228)    
    (300,16.640)
    };
    \addplot+[color=color4,mark=+, mark size=2, error bars/.cd, y dir=both,y explicit] coordinates {
    (10,0)
    (20,0.019)
    (30,0.022)
    (40,0.025)
    (50,0.031)
    (60,0.039)
    (70,0.050)
    (80,0.068)
    (90,0.082)
    (100,0.08)
    (110,0.108)
    (120,0.312)
    (130,0.222)
    (140,0.186)
    (150,0.271)
    (160,0.274)
    };
    \legend{GEN-0, GEN-1, GEN-2, GEN-3}
    \end{axis}
\end{tikzpicture}
\label{fig:eval-gen-mem}
}\\
\subfloat[]{
\begin{tikzpicture}[node distance = 2cm, scale=0.79, transform shape]
\begin{axis}[xlabel= Regions,ylabel= Grounding time (s),legend pos= north west]
    \addplot+[color=color1,mark=otimes, mark size=2,error bars/.cd, y dir=both,y explicit] coordinates {
    (10,0.09)
    (20,0.64)
    (30,2.2)
    (40,5.25)
    (50,10.2)
    (60,18.04)
    (70,28.37)
    (80,42.59)
    (90,59.95)
    (100,84.49)
    (110,109.67)
    (120,124.87)
    (130,159.39)    
    (140,202.09)    
    (150,249.61)    
    (160,302.17)
    };
    \addplot+[color=color2,mark=x, mark size=2,error bars/.cd, y dir=both,y explicit] coordinates {
    (10,0)
    (20,0.08)
    (30,0.43)
    (40,0.98)
    (50,1.83)
    (60,3.09)
    (70,4.99)
    (80,7.26)
    (90,10.33)
    (100,14.43)
    (110,19.03)
    (120,24.61)
    (130,31.71)
    (140,39.8)
    (150,47.93)
    (160,58.04)
    (170,70.19)
    (180,83.38)
    (190,99.59)
    (200,114.75)
    (210,131.99)
    (220,152.28)
    (230,172.56)
    (240,197.92)
    (250,221.26)
    (260,251.59)
    (270,285.04)
    (280,318.55)
    (290,352.04)
    (300,387.43)   
    };
    \addplot+[color=color3,mark=oplus, mark size=2, error bars/.cd, y dir=both,y explicit] coordinates {
    (10,0)
    (20,0.09)
    (30,0.32)
    (40,0.87)
    (50,1.62)
    (60,2.88)
    (70,4.79)
    (80,6.74)
    (90,9.8)
    (100,13.39)
    (110,18)
    (120,22.58)
    (130,28.67)
    (140,36.78)
    (150,44.86)
    (160,55)
    (170,65.12)
    (180,78.32)
    (190,91.46)
    (200,107.66)
    (210,124.96)
    (220,142.12)
    (230,163.43)
    (240,185.73)
    (250,207.93)
    (260,235.35)
    (270,262.74)
    (280,291.15)
    (290,323.62)
    (300,357)
    };
    \addplot+[color=color4,mark=+, mark size=2, error bars/.cd, y dir=both,y explicit] coordinates {
    (10,0)
    (20,0)
    (30,0)
    (40,0)
    (50,0.1)
    (60,0.1)
    (70,0.21)
    (80,0.21)
    (90,0.32)
    (100,0.32)
    (110,0.43)
    (120,0.54)
    (130,0.65)
    (140,0.76)
    (150,0.88)
    (160,1)
    };
    \legend{GEN-0, GEN-1, GEN-2, GEN-3}
    \end{axis}
\end{tikzpicture}
\label{fig:eval-gen-gtime}
}\quad
\subfloat[]{
\begin{tikzpicture}[node distance = 2cm, scale=0.79, transform shape]
\begin{axis}[xlabel= Regions,ylabel= Program size (LOC),legend pos= north west]
\addplot+[color=color1,mark=otimes, mark size=2,error bars/.cd, y dir=both,y explicit] coordinates {
    (10,44)
    (20,49)
    (30,54)
    (40,59)
    (50,64)
    (60,69)
    (70,74)
    (80,79)
    (90,84)
    (100,89)
    (110,94)
    (120,99)
    (130,104)    
    (140,109)    
    (150,114)    
    (160,119)
    };
    \addplot+[color=color2,mark=x, mark size=2,error bars/.cd, y dir=both,y explicit] coordinates {
    (10,46)
    (20,51)
    (30,56)
    (40,61)    
    (50,66)
    (60,71)
    (70,76)
    (80,81)
    (90,86)
    (100,91)
    (110,96)
    (120,101)
    (130,106)
    (140,111)
    (150,116)
    (160,121)
    (170,126)
    (180,131)
    (190,136)
    (200,141)
    (210,146)
    (220,151)
    (230,156)
    (240,161)
    (250,166)
    (260,171)
    (270,176)
    (280,181)
    (290,186)
    (300,191)    
    };
    \addplot+[color=color3,mark=oplus, mark size=2, error bars/.cd, y dir=both,y explicit] coordinates {
    (10,48)
    (20,53)
    (30,58)
    (40,63)
    (50,68)
    (60,73)
    (70,78)
    (80,83)
    (90,88)
    (100,93)
    (110,98)
    (120,103)
    (130,108)
    (140,113)
    (150,118)
    (160,123)
    (170,128)
    (180,133)
    (190,138)
    (200,143)
    (210,148)
    (220,153)
    (230,158)
    (240,163)
    (250,168)
    (260,173)
    (270,178)
    (280,183)
    (290,188)    
    (300,193)    
    };
    \addplot+[color=color4,mark=+, mark size=2, error bars/.cd, y dir=both,y explicit] coordinates {
    (10,89)
    (20,94)
    (30,99)
    (40,104)    
    (50,109)
    (60,114)
    (70,119)
    (80,124)
    (90,129)
    (100,134)
    (110,139)
    (120,144)
    (130,149)
    (140,154)
    (150,159)
    (160,164)
    };
    \legend{GEN-0, GEN-1, GEN-2, GEN-3}
    \end{axis}
\end{tikzpicture}
\label{fig:eval-gen-loc}
}
\caption{Performance results for finding consistent solutions when one relation per region is known.}
\label{fig:eval-gen}
\end{figure}
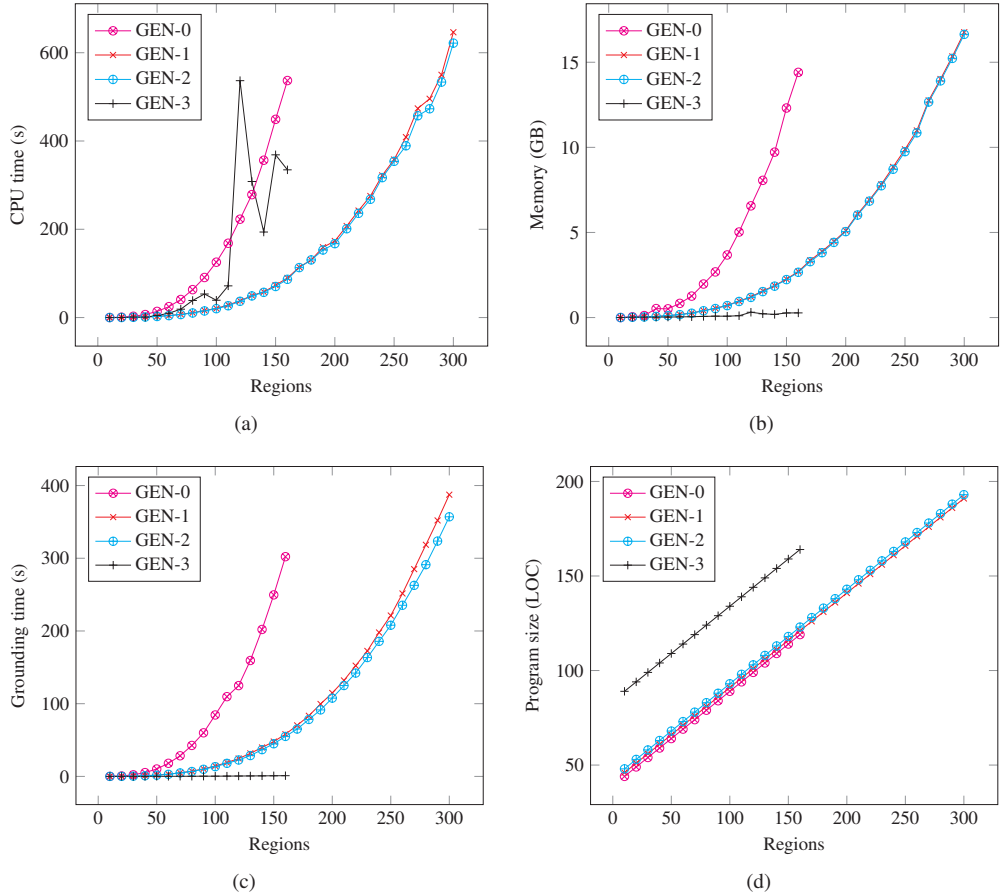


Results are shown in Figure~\ref{fig:eval-gen}, with CPU times for GEN-3 being an average of 10 runs due to variability caused by custom propagation. The antisymmetric and identity optimisations of GEN-1 and GEN-2 allow solutions for up to 300 regions before running out of memory, whereas GEN-0 can only yield results for up to 160 regions. As expected, the improvement of GEN-1 over GEN-0 is much more significant than the improvement of GEN-2 over GEN-1, because the former leads to halving the number of region pairs that are processed whereas the latter only affects the parts of the composition table that include an equality relation. The behaviour of GEN-3 is more complex due to custom propagation: in terms of execution time it is placed lower than GEN-0 but higher than GEN-1 or GEN-2, with a spike around 110 regions. Memory consumption is dramatically lower because of avoiding the expensive grounding of the integrity constraint rule. While this addresses the memory bottleneck, it does not allow GEN-3 to handle larger inputs, because the computational burden is transferred from grounding to solving, leading to CPU times that increase beyond 1000s for more than 160 trajectories.

In terms of grounding time, shown in Figure~\ref{fig:eval-gen-gtime}, the improvement of GEN-1 is still much more significant than that of GEN-2, though the positive effect of GEN-2 is more pronounced here for more than 250 regions. This means that the slight improvement in CPU time of GEN-2 over GEN-1 shown in Figure~\ref{fig:eval-gen-time} is clearly due to reducing grounding time rather than solving time. The comparative minuscule grounding time of GEN-2 is explained similarly to the memory results in Figure~\ref{fig:eval-gen-mem} due to removing the integrity constraint that enforces the composition table.

Program size is compared in Figure~\ref{fig:eval-gen-loc}, where the similarities across GEN-0, GEN-1 and GEN-2 are evident, given that the optimisations introduced in the latter two do not involve significant modifications to the number of rules. The program size of GEN-3 is quite larger due to the inclusion of the Python implementation of the custom propagator, as discussed in Section~\ref{sec:customprop}.

\subsection{Comparison with GQR and SparQ}
\label{sec:expcomp}

The second set of experiments aims to determine how qualitative reasoning using ASP fares against existing state-of-the-art qualitative solvers GQR and SparQ. This can assist in understanding whether there is a tradeoff in terms of performance for opting to use a system that can also handle non-qualitative reasoning. The experiments, hence, involve only the qualitative part of the case study (determining consistency among RCC-5 relations on regions), since GQR and SparQ are unable to handle natively the graph colouring equivalent of the problem of ensuring that no two overlapping regions share the same frequencies. We used the RCC-5 implementations that are provided within the releases of GQR and SparQ. For GQR, we used the consistency command that solves constraint networks, invoked as follows: \verb:./gqr c -C rcc5 -S <input>:. For SparQ, we used the scenario-consistency operation within the constraint-reasoning module: \verb:./sparq constraint-reasoning rcc-5 scenario-consistency first <input>:. Finally, we used the GEN-2 version of the generalised encoding, since it is the best performing one according to the first set of experiments.

Results are shown in Figure~\ref{fig:eval-sota}. SparQ is slower than GEN-2 and consumes similar amounts of memory. Moreover, SparQ is unable to process more than 100 regions, due to heap size limitations of the underlying Lisp interpreter. On the other hand, GQR is clearly more efficient in terms of both CPU time and memory: for 300 regions GQR requires less than 5 seconds and 15MB, two and three orders of magnitude less than GEN-2, respectively. GQR runs for up to around 410 regions, after which execution terminates due to a segmentation fault. These results indicate that while ASP-based encodings improve on SparQ, they are unable to match the performance of GQR, which uses CSP-based reasoning. This should be expected, since GQR is a dedicated qualitative reasoner and, hence, is optimised to solve qualitative problems. However, it is unable to handle any additional non-qualitative aspects, and, hence, would not be able to address problems such as the case study presented in this paper.

\begin{figure}
\centering
\subfloat[]{
\begin{tikzpicture}[node distance = 2cm, scale=0.8, transform shape]
\begin{axis}[xlabel= Regions,ylabel= CPU time (s),legend pos= north west]
    \addplot+[color=color1,mark=otimes, mark size=2,error bars/.cd, y dir=both,y explicit] coordinates {
    (10,0.001)
    (20,0.002)
    (30,0.006)
    (40,0.006)
    (50,0.012)
    (60,0.016)
    (70,0.026)
    (80,0.034)
    (90,0.049)
    (100,0.074)
    (110,0.099)
    (120,0.136)
    (130,0.183)
    (140,0.243)
    (150,0.314)
    (160,0.4)
    (170,0.506)
    (180,0.632)
    (190,0.78)
    (200,0.956)
    (210,1.154)
    (220,1.38)
    (230,1.645)
    (240,1.946)
    (250,2.281)
    (260,2.661)
    (270,3.094)
    (280,3.567)
    (290,4.098)
    (300,4.688)
    };
    \addplot+[color=color2,mark=x, mark size=2,error bars/.cd, y dir=both,y explicit] coordinates {
    (10,0.07)
    (20,0.08)
    (30,0.21)
    (40,0.73)
    (50,1.59)
    (60,3.81)
    (70,7.18)
    (80,13.23)
    (90,22.1)
    (100,35.1)
    };
    \addplot+[color=color3,mark=oplus, mark size=2, error bars/.cd, y dir=both,y explicit] coordinates {
    (10,0)
    (20,0.09)
    (30,0.43)
    (40,0.98)
    (50,2.05)
    (60,3.74)
    (70,5.74)
    (80,8.82)
    (90,12.42)
    (100,17.54)
    (110,23.65)
    (120,31.77)
    (130,40.89)
    (140,50.02)
    (150,62.24)
    (160,76.46)
    (170,94.8)
    (180,112.73)
    (190,129.44)
    (200,150.87)
    (210,179.6)
    (220,206.21)
    (230,234.81)
    (240,265.56)
    (250,301.73)
    (260,341.83)
    (270,388.47)
    (280,428.82)
    (290,476.75)
    (300,555.16)
    };
    \legend{GQR, SparQ, GEN-2}
    \end{axis}
\end{tikzpicture}
\label{fig:eval-sota-time}
}\quad
\subfloat[]{
\begin{tikzpicture}[node distance = 2cm, scale=0.8, transform shape]
\begin{axis}[xlabel= Regions,ylabel= Memory (GB),legend pos= north west]
    \addplot+[color=color1,mark=otimes, mark size=2,error bars/.cd, y dir=both,y explicit] coordinates {
    (10,0)
    (20,0)
    (30,0)
    (40,0)
    (50,0)
    (60,0)
    (70,0)
    (80,0)
    (90,0)
    (100,0)
    (110,0)
    (120,0.0081)
    (130,0.0081)
    (140,0.0084)
    (150,0.0087)
    (160,0.009)
    (170,0.009)
    (180,0.0094)
    (190,0.0094)
    (200,0.0101)
    (210,0.0101)
    (220,0.0103)
    (230,0.0114)
    (240,0.0117)
    (250,0.0118)
    (260,0.0121)
    (270,0.0122)
    (280,0.0136)
    (290,0.0139)    
    (300,0.0142)
    };
    \addplot+[color=color2,mark=x, mark size=2,error bars/.cd, y dir=both,y explicit] coordinates {
    (10,0)
    (10,0.0516)
    (20,0.0517)
    (30,0.0927)
    (40,0.1107)
    (50,0.1397)
    (60,0.1892)
    (70,0.2814)
    (80,0.3724)
    (90,0.4562)
    (100,0.6391)
    };
    \addplot+[color=color3,mark=oplus, mark size=2, error bars/.cd, y dir=both,y explicit] coordinates {
    (10,0)
    (20,0.018)
    (30,0.035)
    (40,0.0601)
    (50,0.1091)
    (60,0.1673)
    (70,0.267)
    (80,0.3516)
    (90,0.5173)
    (100,0.6901)
    (110,0.907)
    (120,1.1632)
    (130,1.5017)
    (140,1.8198)
    (150,2.1998)
    (160,2.6322)
    (170,3.2472)
    (180,3.7707)
    (190,4.3643)
    (200,4.9892)
    (210,5.9609)
    (220,6.7651)
    (230,7.6704)
    (240,8.6317)
    (250,9.6569)
    (260,10.753)
    (270,12.5542)
    (280,13.7995)
    (290,15.1042)    
    (300,16.5059)
    };
    \legend{GQR, SparQ, GEN-2}
    \end{axis}
\end{tikzpicture}
\label{fig:eval-sota-mem}
}
\caption{Performance of the proposed ASP encoding compared to GQR and SparQ.}
\label{fig:eval-sota}
\end{figure}
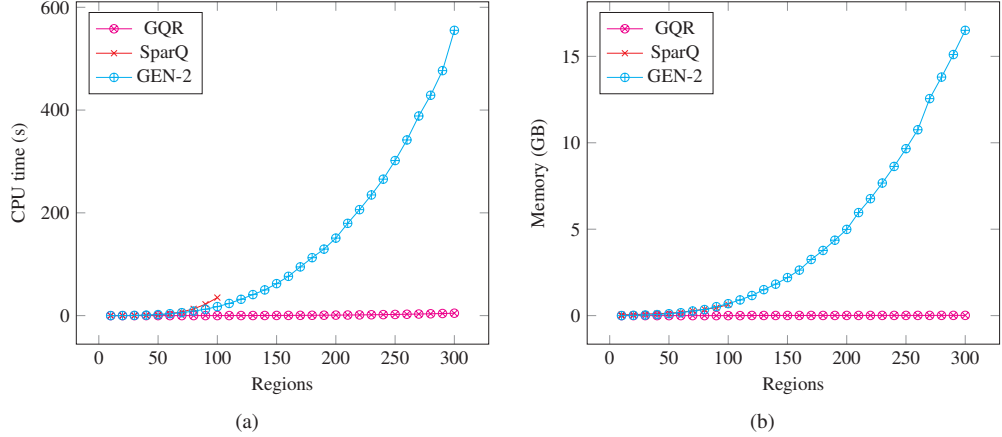

\section{Conclusion and Future Work}
\label{sec:concl}

In this paper, we proposed a generalised approach to encoding qualitative calculi using ASP and a family of ASP encodings that are applicable to any binary qualitative calculus, depending on its algebraic properties. These encodings can be included within any ASP program that also encodes non-qualitative aspects of a problem, such as the presented case study of topological arrangement of base stations and allocation of non-interfering frequencies. Experiments show that the best performing encoding can handle RCC-5 reasoning for up to 300 regions combined with solving an instance of the graph colouring problem. The encodings can be used not only to allow handling of problems that combine qualitative and non-qualitative reasoning but also to extend existing ASP implementations with qualitative aspects.

Future research directions include: (a) exploring additional case studies where the proposed approach can be helpful to address both qualitative and non-qualitative reasoning; (b) expanding the prototype converter tool into a complete toolkit that can facilitate all steps of addressing problems that include qualitative reasoning, from representing and solving them to explaining the produced solutions; (c) examining whether optimisation aspects of the CSP-based implementation of GQR can be used to improve performance of the proposed ASP-based approach.


\clearpage

\appendix

\section{Algorithms and Proofs}
\label{sec:proofs}

\ThmS*
\begin{proof}
Let's assume that no model M of Q exists that corresponds to S, which means there is no such assignment $\psi: V \rightarrow \mathcal{U}$ as described in Definition~\ref{def:problem2}. This would then mean one of the following:
\begin{itemize}[leftmargin=*]
    \item S contains $relation(r)$ for an $r \notin \mathcal{R}$, which contradicts the definition of $relation$ in Section~\ref{sec:domrel}.
    \item S contains $table(r_1,r_2,r_c)$ for $r_1,r_2,r_c \in \mathcal{R}$ with $r_c$ not in cell $(r_1, r_2)$ of CT, which contradicts the definition of $table$ in Section~\ref{sec:ctspace}.
    \item S contains $element(x)$ for an $x \notin V$, which contradicts the definition of $element$ in Section~\ref{sec:domrel}.
    \item S contains $constraint(x,r,y)$ for a constraint $r(x,y) \notin C$, which contradicts the definition of $constraint$ in Section~\ref{sec:input}.
    \item S contains $true(x,r,y)$ when $(\psi(x), \psi(y)) \notin \varphi(R)$, which contradicts the definition of $true$ and related integrity constraints in Section~\ref{sec:ctspace}.
\end{itemize}
Hence, the initial assumption is incorrect, so there exists a model M of Q that corresponds to S.
\end{proof}

\ThmC*
\begin{proof}
Model M corresponds to an assignment $\psi: V \rightarrow \mathcal{U}$ as described in Definition~\ref{def:problem2}. We construct an answer set $S$ of $\Pi_Q$ comprising the following atoms:
\begin{itemize}[leftmargin=*]
    \item $relation(r)$ for all $r \in \mathcal{R}$.
    \item $element(x)$ for all $x \in V$.
    \item $constraint(x,r,y)$ for all constraints $r(x,y) \in C$ with $r \in \mathcal{R}$ and $x,y \in V$.
    \item $true(x,id,x)$ for all $x \in V$.
    \item $table(r_1,r_2,r_c)$ for $r_1,r_2,r_c \in \mathcal{R}$ for all $r_c$ in cell $(r_1, r_2)$ of CT.
    \item $true(x,r,y)$ when $(\psi(x), \psi(y)) \in \phi(R)$.
\end{itemize}
We observe that all rules and constraints of $\Pi_Q$ are satisfied by $S$, and any proper subset of $S$ violates either a fact or one of the two rules defining predicate $true$.
Hence, $S$ is an answer set of $\Pi_Q$.
\end{proof}

\CorC*
\begin{proof}
From Theorems~\ref{thm:sound} and~\ref{thm:complete} and by the fact that answer set existence of ASP programs containing only choice rules and integrity constraints belongs to NP~\cite{DBLP:conf/slp/MarekT89,DBLP:journals/jlp/CadoliS93}, we can deduce that QC model existence also belongs to NP. Additionally, at least one instance of the QC model existence problem, the case of RCC-8, is \textsc{NP}-hard~\cite{Renz1999}. Hence, QC model existence is \textsc{NP}-complete.
\end{proof}

\section{Partial Encodings}
\label{sec:encod}

The partial GEN-0 encoding below only encodes the first row of Table~\ref{tab:composition_table}. Complete versions of the encodings are available at https://github.com/gmparg/ICLP2020.

\begin{verbatim}
{true(X,R,Y) : relation(R)} = 1 :- element(X); element(Y); X != Y.
:- true(X,R1,Y); true(Y,R2,Z); not true(X,Rout,Z) : table(R1,R2,Rout).
true(X,eq,X) :- element(X).
:- constraint(X,_,Y); not true(X,R,Y) : constraint(X,R,Y).
relation(dr; eq; po; pp; ppi).
table(dr, eq, (dr)).
table(dr, po, (dr;po;pp)).
table(dr, pp, (dr;po;pp)).
table(dr, ppi, (dr)).
table(dr, dr, (eq;po;pp;ppi;dr)).
\end{verbatim}

The GEN-1 encoding replaces the first two lines above with the following:

\begin{verbatim}
{true(X,R,Y) : relation(R)} = 1 :- element(X); element(Y); X < Y.
:- true(X,R1,Y); X < Y; true(Y,R2,Z); Y < Z; 
     not true(X,Rout,Z) : table(R1,R2,Rout).
true(Y,ppi,X) :- true(X,pp,Y), X < Y.
true(Y,pp,X) :- true(X,ppi,Y), X < Y.
\end{verbatim}

The latter two lines are to ensure converse pairs are produced for completeness but can be omitted if not required. The GEN-2 encoding replaces the second line above with the following:

\begin{verbatim}
:- true(X,eq,Y); true(Y,R,Z); not true(X,R,Z); Y < Z.
:- true(X,R,Y); true(Y,eq,Z); not true(X,R,Z); X < Y.
:- true(X,R1,Y); X < Y; true(Y,R2,Z); Y < Z; R1!=eq; R2!=eq; 
                     not true(X,Rout,Z) : table(R1,R2,Rout).
\end{verbatim}

The GEN-3 encoding removes these lines completely, since the composition table is enforced through custom propagation using Python code.

In the experiments the following encoding of the 3-colouring problem is used:

\begin{verbatim}
color(red; green; blue).
{hasColor(X,C) : color(C)} = 1 :- element(X).
:- arc(V1, V2), hasColor(V1, X), hasColor(V2, Y), X=Y.
arc(V2, V1):- arc(V1, V2).
arc(V1, V2):-true(V1,eq,V2), V1!=V2.
arc(V1, V2):-true(V1,po,V2).
arc(V1, V2):-true(V1,pp,V2).
arc(V1, V2):-true(V1,ppc,V2).
\end{verbatim}

\label{lastpage}
\end{document}